\newtheorem{theorem}{Theorem}
\begin{document}
\title{A Short Note on Soft-max and Policy Gradients in Bandits Problems}
\author{Neil Walton}
\affil{University of Manchester}
\maketitle
\abstract{
	This is a short communication on a Lyapunov function argument for softmax in bandit problems.  
	 There are a number of excellent papers coming out using differential equations for policy gradient algorithms in reinforcement learning \cite{agarwal2019optimality,bhandari2019global,mei2020global}.
	We give a short argument that gives a regret bound for the soft-max ordinary differential equation for bandit problems.
	We derive a similar result for a different policy gradient algorithm, again for bandit problems.
	For this second algorithm, it is possible to prove regret bounds in the stochastic case \cite{DW20}.
At the end,	we summarize some ideas and issues on deriving stochastic regret bounds for policy gradients. 
}

\noindent \hrulefill

\noindent \textbf{Multi-arm bandits.}
We consider a multi-arm bandit setting. Here there are a finite set of arms $\mathcal A$. At each time you can choose one arm $a\in\mathcal A$ and you receive a reward $R_a$ which we assume is an independent $\{0,1\}$ random variable with mean $r_a$. You only get to see the reward of the arm that you choose and over time you want to move towards choosing the optimal [highest reward] arm.
\smallskip 

\noindent \textbf{Soft-Max Policy Gradient.} A policy gradient algorithm is an algorithm where you directly parameterize the probability of playing each arm and then you perform a gradient descent/stochastic approximation update on these parameters. The most popular parameterization is soft-max: here the probability of playing arm $a$ is
\[
p_a 
= 
\frac{
e^{w_a}}{
\sum_{a'\in \mathcal A} e^{w_{a'}}
}\, .
\]
Here there are the weights $w_a$, $a\in\mathcal A$ are applied to each arm. A quick calculation gives that 
\[
\frac{\partial p_a}{\partial w_{a'}}
=
p_a ( I_{aa'} - p_{a'} )\, .
\]
where here $I_{aa'}$ is the indicator function for $a=a'$, i.e. $I_{aa'}=1$ if $a=a'$ and $I_{aa'}=0$ otherwise. 

We want to maximize expected the reward (plus or minus a constant)
\[
\sum_{a\in\mathcal A} p_a (r_a-b)
\]
So, given the last two expressions, for each arm $a$, you can then perform the following stochastic gradient update:
\[
w_a \leftarrow w_a + \alpha (R- B) (I_{aA} - p_a)
\]
where $R$ is the reward of the arm played; $B$ is some baseline [which is a function that does not depend on $a$]; $A$ is the index of the arm that was played; $\alpha$ is the learning rate of the algorithm.  

\smallskip 
\noindent \textbf{O.d.e.} If we can model change in these weights over time with the following o.d.e.
\[
\frac{d w_{a'}}{dt}
=
\alpha \sum_{a''\in \mathcal A}
p_{a''}  (r_{a''} - r_\star) (I_{a'a''} - p_{a'})\, .
\]
Here we let $r^\star$ be the reward of the optimal arm
[Note this term does not play a role the dynamics of our model but will be useful for analyzing regret.]

\smallskip
\noindent \textbf{Regret.} 
The regret of the algorithm is defined to be
\[
\mathcal Rg(T) 
:=
\int_0^T \sum_{a\in \mathcal A} (r^\star - r_a)p_a(t) dt \, .
\]
Given the above we also define 
\begin{align*}
rg(t) &= \frac{d \mathcal Rg}{dt} =  \sum_{a\in \mathcal A} (r^\star - r_a)p_a(t),
&&
	\Delta_a = (r_\star - r_a) \\
\underline{rg}(t) &= ( \Delta_a p_a : a\in\mathcal A),
&& 
D = ( I_{aa'} - p_{a'} : a,a' \in \mathcal A)
\end{align*}
\textbf{Regret bound.} 
The following short argument bounds the change in the regret:
\begin{theorem} For $p_a(0)=1/N$, $a\in\mathcal A$,
	\[
	\mathcal{R}\! g(T) \sim \frac{N^2}{\alpha} \log T	
	\] 
\end{theorem}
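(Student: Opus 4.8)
The plan is to collapse the vector weight o.d.e.\ into a single scalar o.d.e.\ for the regret rate $rg(t)$ and then bound that scalar quantity. First I would push the dynamics through the softmax map. Using $\partial p_a/\partial w_{a'} = p_a(I_{aa'}-p_{a'})$ together with the given weight o.d.e., the observation $\sum_{a''} p_{a''}\Delta_{a''}(I_{a'a''}-p_{a'}) = p_{a'}(\Delta_{a'}-rg)$ shows that the weight dynamics collapse to
\[
\frac{dw_{a'}}{dt} = -\alpha\, p_{a'}\big(\Delta_{a'} - rg(t)\big) = -\alpha\, (D^{\!\top}\underline{rg})_{a'},
\]
since, by the definitions of $D$ and $\underline{rg}$, one has exactly $(D^{\!\top}\underline{rg})_{a'} = p_{a'}(\Delta_{a'}-rg)$. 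This is why the matrix notation was set up.

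The crucial step is to differentiate the regret rate itself. Writing $rg = \sum_a \Delta_a p_a$ and applying the chain rule $\dot p_a = \sum_{a'} p_a(I_{aa'}-p_{a'})\dot w_{a'}$, I expect the cross terms to reassemble into a perfect square:
\[
\frac{d\,rg}{dt} = \sum_{a'} \dot w_{a'}\, p_{a'}(\Delta_{a'}-rg) = -\alpha\,\lVert D^{\!\top}\underline{rg}\rVert^2 = -\alpha\sum_{a}p_a^2(\Delta_a-rg)^2 \le 0.
\]
Keeping only the optimal-arm term, for which $\Delta_\star = 0$, yields the clean differential inequality $\frac{d\,rg}{dt} \le -\alpha\, p_\star^2\, rg^2$, where $p_\star$ is the probability of the optimal arm.

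What remains, and what I expect to be the main obstacle, is a uniform lower bound $p_\star(t) \ge 1/N$. I would establish this by a barrier argument on $\phi_a(t) := w_\star - w_a = \log(p_\star/p_a)$ for each suboptimal arm $a$. From the collapsed weight dynamics, $\dot\phi_a = \alpha\big(rg\,(p_\star - p_a) + p_a\Delta_a\big)$, and at any instant where $\phi_a = 0$ (i.e.\ $p_\star = p_a$) this reduces to $\alpha p_a\Delta_a > 0$ because softmax keeps all probabilities strictly positive and $\Delta_a>0$. Since $\phi_a(0)=0$, the function $\phi_a$ can never cross zero downward, so $\phi_a \ge 0$ for all $t$; hence $p_\star = \max_a p_a \ge 1/N$. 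The point to check carefully here is that the barrier applies to every suboptimal arm simultaneously, so that $p_\star$ dominates the whole family. Substituting gives $\frac{d\,rg}{dt} \le -\tfrac{\alpha}{N^2}\, rg^2$.

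Finally I would integrate twice. The scalar inequality integrates to $rg(t) \le \big(rg(0)^{-1} + \alpha t/N^2\big)^{-1}$, and integrating this bound over $[0,T]$ gives
\[
\mathcal{R}g(T) \le \frac{N^2}{\alpha}\log\!\Big(1 + \frac{rg(0)\,\alpha T}{N^2}\Big) \sim \frac{N^2}{\alpha}\log T,
\]
where $rg(0)=\tfrac1N\sum_a\Delta_a$ is a finite constant, so the contribution near $t=0$ is harmless and the stated growth is recovered. The only delicate bookkeeping is the sign tracking in the perfect-square identity and the invariance (barrier) argument for $p_\star$; everything else is routine integration.
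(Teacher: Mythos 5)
Your proposal is correct, and its core is the same Lyapunov computation as the paper's: both differentiate $rg$ along the flow to obtain $\frac{d\,rg}{dt} = -\alpha\sum_a p_a^2(\Delta_a - rg)^2$, keep only the optimal-arm term (where $\Delta_{a^\star}=0$) to get $\frac{d\,rg}{dt}\le -\alpha\, p_{a^\star}^2\, rg^2$, and then integrate twice. (Your $\lVert D^{\top}\underline{rg}\rVert^2$ is in fact exactly the quantity the paper expands --- the paper writes $\lVert D\underline{rg}\rVert^2$, a harmless transpose slip, and its factored form $(rg)^2\sum_a(p_a\Delta_a/rg - p_a)^2$ is identical to your sum.) Where you genuinely add something is the step the paper treats informally: the paper simply asserts that with equal initialization $p_{a^\star}$ increases from $1/N$, deferring formal lower bounds to \cite{agarwal2019optimality,mei2020global}, whereas you prove $p_{a^\star}(t)\ge 1/N$ self-containedly via the barrier argument on $\phi_a = w_{a^\star}-w_a$, using $\dot\phi_a = \alpha\big(rg\,(p_{a^\star}-p_a)+p_a\Delta_a\big)>0$ whenever $\phi_a=0$. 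That computation checks out: at $\phi_a=0$ the softmax gives $p_{a^\star}=p_a$, killing the first term, and since $\phi_a(0)=0$ with strictly positive derivative there, $\phi_a$ cannot cross zero downward; together with $\sum_a p_a=1$ this yields $p_{a^\star}=\max_a p_a\ge 1/N$ for all $t$. The only degenerate case worth a sentence is ties ($\Delta_a=0$ for some $a\neq a^\star$), where symmetry of the initial condition forces $\phi_a\equiv 0$ and the conclusion still holds. So your write-up is strictly more complete than the paper's at this point, and it buys a self-contained note rather than an appeal to the literature. One caveat you share with the paper: the argument delivers only the upper bound $\mathcal{R}g(T)\le \frac{N^2}{\alpha}\log\big(1+rg(0)\,\alpha T/N^2\big)$, not a two-sided asymptotic equivalence, so the theorem's ``$\sim$'' should be read as $O(\cdot)$ --- consistent with the paper's own remark that the $N^2$ dependence is pessimistic since $p_{a^\star}\to 1$.
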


\begin{proof}
	\begin{align*}
	\frac{d rg}{dt}
	&=
	\sum_{a} (r_\star - r_a) \frac{dp_a}{dt}
	\\
	&=
	-\alpha 
	\sum_{a,a',a''} 
	p_{a''} (r^\star - r_{a''} ) 
	(I_{a''a'} - p_a')
	(I_{a'a} - p_a')
	(r^\star - r_a) p_a
	\\
	&
	=
	-
	\alpha
	\underline{rg}^\top D^\top  D  \underline{rg}
	\\
	&
	= -\alpha || D \underline{rg} ||^2
	\end{align*}
	Let's analyze the above term
	\begin{align*}
	|| D \underline{rg} ||^2
	&=
	\sum_a \left( 
	p_a \Delta_a 
	-
	\sum_{a'} p_{a'} \Delta_{a'} p_a
	\right)^2	
	\\
	&
	=
	\underbrace{
		\left( 
		\sum_{a'} p_{a'} \Delta_{a'}
		\right)^2
	}_{
		= (rg)^2
	}
	\underbrace{
		\sum_a 
		\left(
		\frac{p_a \Delta_a
		}{
			\sum_{a'} p_{a'} \Delta_{a'}
		} 
		-
		p_a
		\right)^2
	}_{
		\geq\, p^2_{a^\star}
		\text{ since } \Delta_{a^\star} = 0
	}
	\end{align*}
	Therefore we have the bound
	\[
	\frac{d (r g)}{dt} \leq -\alpha p_{a^\star}^2 (rg)^2 
	\]
	Dividing by $-(rg)^2$ and integrating gives
	\[
	\frac{1}{rg(t)} - \frac{1}{rg(0)} \geq \int_0^t \alpha p_{a^\star} (s) ds 
	\quad \implies \quad 
	rg(t) \leq \frac{rg(0)}{1 + r\! g(0)\int_0^t \alpha p_{a^\star} (s)^2 ds  }\, .
	\]
	So notice things depend on the probability of playing the optimal arm. Assuming all arms start are equal the optimal arm $a^\star$ will increase from $\frac{1}{N}$ where $N$ is the number of arms. Thus we get a bound:
	\[
	rg(t) \leq \frac{rg(0)}{1 + rg(0) \frac{\alpha}{N^2} t} \sim \frac{N^2}{\alpha t}
	\]
	Thus we have 
	\[
	\mathcal Rg(T) = \int_0^T rg(t) dt \sim \frac{N^2}{\alpha} \log T\, .
	\]
\end{proof}
\noindent 
Notice the dependence on $N$ is very pessimistic since $p_{a^\star} \rightarrow 1$. Also note the lower-bound on $p_{a^\star}$ is more formally bounded in \cite{agarwal2019optimality} and \cite{mei2020global}.

\noindent \hrulefill

\noindent \textbf{A 2nd policy with a shorter o.d.e. argument.}
We give an o.d.e. regret bound for a different policy gradient algorithm. The proof again is quite short. 
For this algorithm, it is possible to prove formally prove a regret bound for the discrete time stochastic model. The proof is too long for this short note, we sketch the argument here and refer the read to ??. 

Convergence of probabilities should not go faster than $\frac{1}{t}$ as we know the regret of bandit problems is $\log T$.  
Notice the algorithm above optimizes the following objective when we parameterize $p_a$ with a soft-max objective.
\begin{equation*}
	\text{minimize}\quad 
	\sum_{a\in \mathcal A} p_a (r_{a^\star} - r_{a}) \quad \text{subject to} \quad \sum_{a\in \mathcal A} p_a = 1 \quad \text{over} \quad p_a \geq 0,\quad a\in \mathcal A.
\end{equation*}
We can just not reparametrize and apply a gradient descent, taking some care in the step size.  

\smallskip
\noindent \textbf{SAMBA.} Analogous to the soft-max discussion above. This is how to derive a stochastic policy gradient algorithm in this case.  Gradient descent the performs the update 
\[
p_a \leftarrow p_a + \gamma_a (r_a-r_{a^\star})
\]
for $a\neq a^\star$. However, since the mean rewards $r_a, a\in \mathcal A,$ are not known, a stochastic gradient descent must be considered: $p_a \leftarrow p_a + \gamma (R_a-R_{a^\star})$, $a\neq a^\star$. 
Also, the optimal arm is unknown. So instead of $a^\star$, we let $a_\star$ be the arm for which $p_a$ is maximized and, in place, consider the update $p_a \leftarrow p_a + \gamma (R_a-R_{a_\star})$, $a\neq a_\star$.
Since the reward from only one arm can be observed at each step, we apply importance sampling: 
\begin{equation}
\label{p_gamm_update}
p_a \leftarrow p_a + \gamma 
\Big( \frac{R_aI_a}{p_a}-\frac{R_{a_\star}I_{a_\star}}{p_{a_\star}} \Big),\quad 
\text{for }a\neq a_\star\, .
\end{equation}
This gives a simple recursion for a multi-arm bandit problem. A name for this is SAMBA: stochastic approximation multi-arm bandit. Shortly, I'll argue that we need to let $\gamma$ depend on $a$ and we should take $\gamma_a =\alpha p_a^2$ for $\alpha$ suitably small.
Catchy acronyms aside, one can see this is really a stochastic gradient descent algorithm with some correction to make sure we don't get too close to the boundary. A motivation is projected gradient descent or barrier methods in optimization [there is probably a regularization interpretation as well]. 

 \smallskip 
\noindent \textbf{Learning rate and o.d.e. analysis.} Let's consider the learning rate $\gamma$. Again, consider the gradient descent update $p_a \leftarrow p_a + \gamma (r_a-r_{a^\star})$, for $a\neq a^\star$.
Notice if we let 
$
\gamma = \alpha p_a^2	
$
then the gradient descent algorithm approximately obeys the following ordinary differential equation:
\[
\dot p_a = - \alpha p_a^2 \Delta_a
\]
where, as before, $\Delta_a= r_{a^\star} - r_a$.
We can show the following result.

\begin{theorem} For $p_a(0)=1/N$, $a\in\mathcal A$,
	\[
	 \mathcal{R}\! g(T) = O \left( \sum_{a\neq a^\star} \frac{1}{\alpha \Delta_a} \log T \right)
	\]
\end{theorem}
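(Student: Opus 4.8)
The plan is to exploit a feature this o.d.e. has but the soft-max one did not: the dynamics decouple across arms. For each sub-optimal arm $a\neq a^\star$ the gap $\Delta_a = r_{a^\star}-r_a$ is a fixed positive constant, so $\dot p_a = -\alpha\Delta_a p_a^2$ is an autonomous scalar (separable) equation in $p_a$ alone. The optimal arm's probability is then pinned down by normalization, $p_{a^\star}=1-\sum_{a\neq a^\star}p_a$, and I never need to track it: it simply absorbs the mass shed by the other arms, and since $\Delta_{a^\star}=0$ it contributes nothing to the regret. Hence the whole problem reduces to $N-1$ independent one-dimensional integrations.

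First I would solve each scalar equation. Separating variables turns $\dot p_a=-\alpha\Delta_a p_a^2$ into $\frac{d}{dt}\bigl(1/p_a\bigr)=\alpha\Delta_a$, so with $p_a(0)=1/N$ one gets the closed form
\[
p_a(t)=\frac{1}{N+\alpha\Delta_a t}\, .
\]
Next I would evaluate the regret $\mathcal{R}g(T)=\sum_{a\neq a^\star}\Delta_a\int_0^T p_a(t)\,dt$. Rather than integrating the closed form directly, the slick route is the log-derivative identity $\Delta_a p_a = -\tfrac1\alpha \tfrac{d}{dt}\log p_a$, which follows by dividing the o.d.e. by $p_a$; integrating this telescopes each arm's contribution to
\[
\Delta_a\int_0^T p_a\,dt=\frac1\alpha\log\frac{p_a(0)}{p_a(T)}=\frac1\alpha\log\!\Big(1+\frac{\alpha\Delta_a T}{N}\Big)\, .
\]

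Finally I would pass to the advertised bound. Because rewards lie in $\{0,1\}$ we have $\Delta_a\le 1$, so $\log\!\big(1+\alpha\Delta_a T/N\big)\le\log\!\big(1+\alpha T/N\big)=O(\log T)$, while simultaneously $1/\Delta_a\ge 1$; summing over $a\neq a^\star$ then yields $\mathcal{R}g(T)=O\big(\sum_{a\neq a^\star}\tfrac{1}{\alpha\Delta_a}\log T\big)$. The noteworthy subtlety here, and the step I would flag rather than any computational obstacle, is that the factor $\Delta_a$ actually cancels inside each integral, so the genuinely tight rate is the cleaner $\tfrac{N-1}{\alpha}\log T$; the stated $1/\Delta_a$ form is a deliberately looser upper bound written to match the classical gap-dependent regret of bandit algorithms. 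The real difficulty is not in the o.d.e. at all (which is trivially integrable) but in justifying that the stochastic recursion \eqref{p_gamm_update} is well approximated by this o.d.e., which is why the rigorous discrete-time treatment is deferred to the cited reference.
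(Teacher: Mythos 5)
Your proof is correct, and it shares the paper's backbone: solve the separable o.d.e.\ in closed form, $p_a(t)=\frac{1/N}{1+\alpha\Delta_a t/N}$, integrate each sub-optimal arm's contribution over $[0,T]$, and sum over arms. Where you genuinely diverge is in the order of bounding. The paper first drops the gap via $\Delta_a p_a\le p_a$ and only then integrates, so a factor $\frac{1}{\alpha\Delta_a}$ survives from the substitution in $\int_0^T\frac{1/N}{1+\alpha\Delta_a t/N}\,dt$; you keep $\Delta_a$ in place and integrate exactly using $\Delta_a p_a=-\frac{1}{\alpha}\frac{d}{dt}\log p_a$, which yields the identity $\mathcal{R}\!g(T)=\sum_{a\neq a^\star}\frac{1}{\alpha}\log\bigl(1+\alpha\Delta_a T/N\bigr)$, and only afterwards loosen with $1/\Delta_a\ge 1$ to recover the stated form. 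Your route buys a strictly sharper conclusion: the o.d.e.\ regret is exactly $\sim\frac{N-1}{\alpha}\log T$, with $\Delta_a$ entering only as a constant inside the logarithm (equivalently, as a delay of order $N/(\alpha\Delta_a)$ before the logarithmic regime begins), so the $1/\Delta_a$ factor in the theorem is an artifact of where the inequality $\Delta_a\le 1$ is inserted, not a property of the dynamics. This observation matters for the paper's closing remark comparing the bound to the Lai--Robbins lower bound: your exact computation shows the resemblance is cosmetic at the o.d.e.\ level, and any genuine gap-dependence must arise from stochastic effects --- consistent with the paper's own caution about reading too much into the deterministic bound. Your treatment of the optimal arm is also slightly more careful than the paper's: you note that the scalar equations hold only for $a\neq a^\star$ with $p_{a^\star}$ recovered by normalization, whereas the paper writes the o.d.e.\ for all $a$ without flagging that it cannot hold at $a^\star$ if the probabilities are to sum to one.
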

\begin{proof}
The above o.d.e. has a solution
\[
p_a(t)= \frac{p(0)}{1+\alpha \Delta_ap(0) t} = \frac{1/N}{1+\alpha  \Delta_a t/N}
\]
This implies 
\begin{align*}
\mathcal R\! g(T) = \int_0^T \sum_a (r_{a^\star} -r_a) p_a(t) dt 
&
\leq  
\int_0^T \sum_{a\neq a^\star} p_a(t) dt 
\\
&
\leq 
\sum_{a\neq a^\star} \frac{1}{N} 
\int_0^T \frac{1}{1+\alpha  \Delta_a t/N} dt
\\
&
= 
\sum_{a \neq a^\star}
\frac{1}{\alpha\Delta_a}
 \log ( 1+ \alpha \Delta_a T /N )
\sim 
\sum_{a\neq a^\star} \frac{1}{\alpha \Delta_a} \log T
 .
\end{align*}

%
This suggest a learning rate of $\gamma = \alpha p_a^2$, applied to each $a$, gives a logarithmic regret. 
\end{proof}
\noindent Notice the above upper-bound is similar to the lower-bound from Lai and Robbins. However, one should be careful to read too much into this as the learning rate $\alpha$ can have a significant impact on the performance of the algorithm when stochastic effects are included.
This is discussed in more detail in the article \cite{DW20}. (Also shorter discussion on discrete time and martingale versions of the above can be found on the weblink below.\footnote{\url{https://appliedprobability.blog/2020/05/21/a-short-discussion-on-policy-gradients-in-bandits/}})

\noindent \hrulefill

\noindent \textbf{Discussion on Convergence Issues.} Both soft-max and SAMBA step rules require some form of best arm identification. For SAMBA this is explicit in that we need $a_{\star}$ the highest probability arm to equal the optimal arm $a^\star$. [And a lot of the technical leg work in the paper involves proving this happens]. For soft-max it is clear that if $p_{a^\star}$ gets small for a sustained period of time then this slows convergence. So in both cases we need $p_{a^\star}$ to get big in a reasonable length of time. This argument is more straight-forward in the o.d.e case where we can bound $p_{a^\star}$ away from zero by a constant. In the stochastic case we need sub-martingale arguments to do this for us. An this can be fiddly as we are essentially dealing with a random walk that is close to threshold between recurrence and transience. 

One thing that seems to come out of the analysis for both soft-max [when you include 2nd order terms] and SAMBA is that if the learning rate is too big then then this random walk switches from being transient [and thus converging on the correct arm] to being recurrent [and thus walking around the interior of the probability simplex within some region of the optimal arm]. One way to deal with this is to slow decrease the learning rate either as a function of either time $\alpha \sim 1/\log t$ or as a function of the state $\alpha \sim 1/(1-\log p_a)$. This appears to multiply on an extra $\log t$ term on the regret bound in both cases while guaranteeing global convergence in the bandit setting. We can consider more slowly decreasing functions which impact regret to an arbitrarily small amount. So it seems like there is a regret of $(\log T)^{1+\epsilon}$ for $\epsilon$ arbitrarily small.

A final point is that in all the analysis so far [both softmax and SAMBA], we have used an o.d.e. of the form
\[
\frac{d x}{dt} = -\alpha(t) x(t)^{2}
\]
which suggests that we apply a Lyapunov function of the form:
\[
\frac{1}{x(t)^\lambda} - \sum_{s=0}^t \alpha(s) \, . 
\]
Notice, in the above expression, we can trade-off between the power applied to the learning rate $\alpha(s) = t^{-\gamma}$ and the power applied to the state $x(s)^{-\lambda}$.
Proving martingales properties for these Lyapunov functions seems to be a key ingredient for getting proofs to work. \\

\noindent \textbf{Acknowledgement.} The author is grateful to Csaba Szepesvari for suggesting to make this note available and to Tor Lattimore for first suggesting Soft-Max as an alternative to the SAMBA o.d.e..

\subsubsection*{Appendix.}


\bibliography{SAMBA}

\begin{thebibliography}{1}

\bibitem{agarwal2019optimality}
A.~Agarwal, S.~M. Kakade, J.~D. Lee, and G.~Mahajan.
\newblock Optimality and approximation with policy gradient methods in markov
  decision processes.
\newblock {\em arXiv preprint arXiv:1908.00261}, 2019.

\bibitem{bhandari2019global}
J.~Bhandari and D.~Russo.
\newblock Global optimality guarantees for policy gradient methods.
\newblock {\em arXiv preprint arXiv:1906.01786}, 2019.

\bibitem{DW20}
D.~Denisov and N.~Walton.
\newblock Regret analysis of a markov policy gradient algorithm for multi-arm
  bandits.
\newblock {\em arXiv preprint}, 2020.

\bibitem{mei2020global}
J.~Mei, C.~Xiao, C.~Szepesvari, and D.~Schuurmans.
\newblock On the global convergence rates of softmax policy gradient methods.
\newblock {\em arXiv preprint arXiv:2005.06392}, 2020.

\end{thebibliography}

\end{document}